\newtheorem{theorem}{Theorem}
\newtheorem{mydef}{Definition}
\title{\LARGE \bf
Are We On The Same Page? 
\\Hierarchical Explanation Generation for Planning Tasks in Human-Robot Teaming using Reinforcement Learning
}
\author{Mehrdad Zakershahrak, Samira Ghodratnama
\thanks{Mehrdad Zakershahrak, and Samira Ghodratnama are with the School of Computing, Informatics and Decision Systems Engineering, Arizona State University, Tempe, AZ.
        {\tt\small \{mzakersh, sghodrat\}@asu.edu} 
         {}}%
}
\begin{document}

\maketitle
\thispagestyle{empty}
\pagestyle{empty}

\begin{abstract}
Providing explanations is considered an imperative ability for an AI agent in a human-robot teaming framework.
The \textit{right} explanation provides the rationale behind an AI agent's decision-making. 
However, to maintain the human teammate's cognitive demand to comprehend the provided explanations, prior works have focused on providing explanations in a specific order or intertwining the explanation generation with plan execution.
Moreover, these approaches do not consider the degree of details required to share throughout the provided explanations. 
In this work, we argue that the agent-generated explanations, especially the complex ones, should be abstracted to be aligned with the level of details the human teammate desires to maintain the recipient's cognitive load.
Therefore, learning a hierarchical explanations model is a challenging task. 
Moreover, the agent needs to follow a consistent high-level policy to transfer the learned teammate preferences to a new scenario while lower-level detailed plans are different.
Our evaluation confirmed the process of understanding an explanation, especially a complex and detailed explanation, is hierarchical. 
The human preference that reflected this aspect corresponded exactly to creating and employing abstraction for knowledge assimilation hidden deeper in our cognitive process.  
We showed that hierarchical explanations achieved better task performance and behavior interpretability while reduced cognitive load.
These results shed light on designing explainable agents utilizing reinforcement learning and planning across various domains.
\end{abstract}

\section{Introduction}
Intelligent agents, mostly in the form of robots, continuously increase their impact on our daily life. 
An important social aspect of this phenomenon is the evolution of the robot's role as a team player in various domains.
In this regard, a robotic teammate is expected to be compatible with a human peer in a teaming interaction scheme \cite{cooke2015team}.
Therefore, the robotic teammate is desired to act comprehensible and explain the rationale behind its decision-making if necessary.

In a teaming context, explanations provide the rationale behind an individual agent's decision-making~\cite {lombrozo2006structure}, help build a shared situation awareness, and maintain trust between teammates~\cite{cooke2015team,endsley1988design, zakershahrak2020order}.
Explanations could be interactive, i.e., both explainer and the recipient have goals in the exchange, which could be conflicting. 
Their goals affect what is to be considered as an acceptable explanation~\cite{sormo2005explanation}.
Consequently, a social agent would construct explanations as excuses when a task cannot be achieved~\cite{chakraborti2017ai,gobelbecker2010coming}.

State-of-the-art work on generating explanations focuses on the factual consistency of explanations from the explainer's perspective rather than understandable explanations for the recipient of the explanations (Fig.~\ref{fig:setting})~\cite{gobelbecker2010coming,hanheide2017robot,sohrabi2011preferred}.
Moreover, there is a gap in human-robot teaming research when the teammate's goal and plan are ambiguous at the same time concerning plan explainability, predictability, and legibility all together~\cite{chakraborti2019explicability,zakershahrak2018interactive}. 
State-of-the-art work on generating explanations focuses on the factual consistency of explanations from the explainer's perspective rather than understandable explanations for the recipient of the explanations \cite{gobelbecker2010coming,hanheide2017robot,sohrabi2011preferred}.
In a teaming scheme, the human peer potentially asks the robotic agent questions along the lines of (1) \textit{Why did you do that?}, and (2) \textit{Why can't you do that?}
The answer to these questions can be complicated, depending on the social agent's high-level intention, and requires alternate model-checking. On the other hand, humans are known to have a limited attention span that makes it cognitively demanding to understand the robotic teammate's explanations.
Accordingly, providing the \textit{right} explanation might not be an adequate remedy, especially when the task is complex. 

Depending on the attention level required to fulfill the task, the goal of the human-robot interaction is to either maintain or reduce the cognitive load of the human teammate.
Overall, the attention and cognitive load have positive correlation~\cite{cheng2018exploring,berggren2013affective}.
Since our goal here is to maintain attention and focus, we aim to keep the human teammate's cognitive load throughout the interaction.
Prior work in maintaining cognitive demand in explainable planning tasks focused on (i) the influence of information order~\cite{zakershahrak2020order}; and (ii) information breakdown in an explanation generation progression framework~\cite{zakershahrak2019online}.
Progression builds later concepts on previous ones and is known to contribute to better learning and modeling the humans' preferences for information order in receiving such explanations to assist understanding in an inverse reinforcement learning setting~\cite{zakershahrak2020order}.
On the other hand, information breakdown should be intertwined with plan execution, which helps spread out the information to be explained and thus reduce humans' mental workload in highly cognitive demanding tasks~\cite{zakershahrak2019online}.

A significant similarity between planning and reinforcement learning (RL) is that both approaches provide a behavioral guideline for the agent.
While RL creates a policy based on interactions with the world, planning uses a model of the environment to create a policy.
However, a distinctive difference between these two approaches is the access to the transition and reward functions.
RL assumes visiting the states once per episode, known as \textit{irreversible sample environment}.
Simultaneously, planning assumes having access to the transition function without knowing the underlying probability, known as \textit{reversible sample environment} ~\cite{moerland2020framework}.
Therefore, RL focuses on developing an ``unknown model'' while planning starts with a ``known model''.

Recently, there is a growing focus on hierarchical approaches in reinforcement learning employing the options framework \cite{barto2003recent,konidaris2007building,sutton1999between}.
Combining planning with RL has been studied before for robotic and control mainly focused on gradient-based planning methods ~\cite{levy2017learning,grounds2005combining,illanes2020symbolic,yang2018peorl}.
These approaches provide the ability to learn an elaborated policy containing low-level actions while focusing on learning skills on a higher level.
Furthermore, explanation generation as answers to the model checking questions mentioned above, transforms the planning problem as a black-box optimization problem ~\cite{bock1984multiple}.
The domain dynamics would be treated as constraints to the (non-linear) optimization problem by knowing the objective.
Therefore, performing policy search using RL in this parameter space yields an optimal plan ~\cite{moerland2020framework}.

This work takes a step further from the previous works \cite{zakershahrak2019online, zakershahrak2020order} by proposing a generalized hierarchical framework investigating sub-explanations as more than one unit feature change, called \textit{options}. 
It implies that the robot may be allowed to explain multiple aspects of the plan simultaneously as long as they categorize under the same intent, which is useful for explaining highly correlated aspects.
Further, we investigate how abstract or in-depth an AI agent should be considered sufficient for its human peer.
As a result, we consider the detailed level of an explanation required to clarify the goal and plan ambiguity while recognizing the limited attention span of the explanations' recipient.
This implicates communicating model differences in a hierarchical explanation generation framework.
Finally, we explore composing different options to present more complex information.
The main contributions of our framework is as follows:
\begin{enumerate}
    \item It is taskable since the user can define tasks as goal conditions in the symbolic domain.
    \item It improves sample efficiency as the high-level policies can be used for transferring learning from previously learned policies.
    \item It provoke better human situation awareness, trust and higher overall performance.
\end{enumerate}
\begin{figure} 
    \centering
    \includegraphics[width = 0.7\columnwidth]{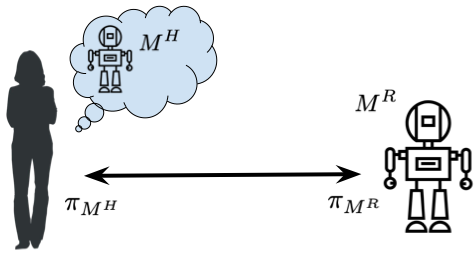}
    \caption{Explanation generation as model reconciliation \protect\cite{chakraborti1802plan}. 
    $M^R$ denotes the robot model, and $M^H$ denotes the human model used to generate her expectation of the robot's behavior ($\pi_{M^H}$).
    When the expectation does not match the robot's behavior, $\pi_{M^R}$, explanations must be generated.}
    \label{fig:setting}
    \vspace{-0.5cm}
\end{figure} 
To this end, in Section~\ref{sec:HRL} a general formulation based on goal-based Markov Decision Processes for generating progressive explanation is presented.
The proposed formulation is given the hierarchical information communication in explanation.
We propose to learn a quantification of the cognitive effort for each step as a policy in a reinforcement learning framework~\cite{sutton1990integrated,sutton1999between, konidaris2018skills}. 
We test the following hypothesis in Section~\ref{sec:eval}:
\begin{itemize}
    \item H1: Our method learns about humans' preferences in receiving abstract or in-depth explanations.
    \item H2: Assuming cognitive load correlation with replanning cost, hierarchical explanations maintain cognitive load and improve task performance.
\end{itemize}
The subjective post-study results validated H1.
Furthermore, a comparison with two baseline methods validated H2. 
We show that the hierarchy of explanations corresponded well to the ``abstraction'' of the curve on domain features result in higher human situation awareness, trust, and overall task performance.
These hypotheses confirm the impact of hierarchical explanations on human situation awareness, trust, and higher overall performance, which increases agent transparency ~\cite{de2014design,lyons2014transparency,mercado2016intelligent,ososky2014determinants,miller2021trust}.
We evaluated these hypotheses on a scavenger-hunt domain described in Section~\ref{sec:domain}. 

\section{Background}
\subsection{Planning}
A planning task is defined using a tuple $\mathcal{P}=\langle \mathcal{F}, A, I, G\rangle$, where $\mathcal{F}$ is the set of predicates used to specify the state and A the set of actions used to update the state. 
Each action $a \in A$ changes the state of the world by adding or deleting predicates.
$a = (pre(a), eff^{+}(a), eff^{-}(a), c_a)$; where $pre(a)$ denotes the preconditions of the action $a$, and $eff^{+}(a), eff^{-}(a)$ indicate add and delete effects, respectively, and $c_a$ is the cost of the action.

A goal-based Markov Decision Process (MDP) is defined as a tuple $\mathbf{M}=\langle S, A, T, r, \gamma, G \rangle$, where $S$ represents a unique state in which a set of predicates, $f \subseteq \mathcal{F}$, is available and $\{f\}\backslash \mathcal{F}$ is unavailable.
An action $a \subseteq A$, is legible in state $S$ when the set of preconditions of action $a$ is available in $f$, that is $pre(a)\in f$. 
Similarly, the transition function $T(S,a, S^\prime)$, is transferring the current state to the state $S^\prime$, where $S^\prime = \{f \cup eff^{+}(a)\} \backslash eff^{-}(a)$.
The domain dynamics is represented as the transition function $T$ that determines the probability of transitioning into state $s'$ when taking an action $a$ in state $s$ (i.e., $P(s'|s,a)$).
$R$ is the reward function and the goal of the agent is to maximize the expected cumulative reward.
$\gamma$ is the discount factor that encodes the agent's preference of current rewards over future rewards.
$G$ is a set of goal states where for all $g \in G$, $T (g, a, g) = 1, R(g, a, g) = 0, \forall a \in A$. 
The reward is represented by R(s,a,$s^\prime)= 1$, if $s^\prime \in G$ and R(s,a,$s^\prime$) = 0 otherwise.

\begin{mydef}[Plausible Plan] \label{def:plan}
Given $\mathbf{M}=\langle S, A, T, r, \gamma, G \rangle$, and an initial state $i \in I$, a plausible plan is a set of sequence of actions, $\rho = \{a_1, a_2,...,a_n\}$ where it reaches the goal, $g \in G$, deterministically. 
\end{mydef}

\begin{theorem}
A plausible plan $\pi$ maximizes the probability of reaching a goal state.
\end{theorem}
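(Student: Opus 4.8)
The plan is to reduce the statement to the elementary observation that a deterministic goal-reaching plan attains the largest possible value of the goal-reaching probability, namely $1$. Throughout I would identify the action sequence $\rho = \{a_1,\dots,a_n\}$ of Definition~\ref{def:plan} with the (non-stationary, deterministic) policy $\pi$ that prescribes $a_t$ at step $t$, so that ``plan'' and ``policy'' denote the same object. For a policy $\pi$ and initial state $s$, let $\tau_\pi$ be the first time the induced trajectory enters $G$ (with $\tau_\pi = \infty$ if $G$ is never reached), and define the goal-reaching probability $p_\pi(s) = \Pr_\pi(\tau_\pi < \infty \mid s_0 = s)$. This $p_\pi$ is precisely the quantity the theorem asks us to maximize.

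First I would connect the MDP return to $p_\pi$ through the reward structure. Because $R(s,a,s') = 1$ exactly when $s' \in G$ and $R(g,a,g) = 0$ on the absorbing goal set, a trajectory collects reward $1$ on its single entry into $G$ and $0$ at every other step. Hence the expected discounted return decomposes as a first-passage sum,
\[
V^\pi(s) = \mathbb{E}_\pi\!\left[\sum_{t \ge 0} \gamma^t R_t \,\middle|\, s_0 = s\right] = \sum_{t \ge 0} \gamma^{t}\,\Pr_\pi(\tau_\pi = t \mid s_0 = s),
\]
so that $V^\pi(s) = \mathbb{E}_\pi[\gamma^{\tau_\pi}]$ with the convention $\gamma^{\infty} = 0$. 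Setting $\gamma = 1$ for this pure reachability objective yields $V^\pi(s) = p_\pi(s)$, which identifies ``maximizing the probability of reaching a goal state'' with maximizing $V^\pi$ and thus aligns the planning objective with the probabilistic one.

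Next I would establish the universal bound and show it is met. Since $p_\pi(s)$ is a probability, $p_\pi(s) \le 1$ for every policy $\pi$, so $1$ upper-bounds the goal-reaching probability over all policies. By Definition~\ref{def:plan}, a plausible plan $\pi$ reaches some $g \in G$ deterministically from the initial state; executing its action sequence therefore drives the trajectory into $G$ along transitions of probability one, whence $\tau_\pi < \infty$ almost surely and $p_\pi(s) = 1$. Combining the two facts, a plausible plan attains the supremum $1$, i.e.\ it maximizes $V^\pi(s)$, which is the claim.

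The delicate point is not the final inequality---probability one is trivially maximal---but reconciling the stochastic kernel $T(s,a,s') = P(s'\mid s,a)$ with the word ``deterministically'' in Definition~\ref{def:plan}. The argument only closes because a plausible plan is, by definition, supported on transitions of probability one, so the stochasticity of $T$ plays no role along the executed sequence; I would make this explicit by noting that at each step the chosen action satisfies $T(s_t, a_t, s_{t+1}) = 1$ for the realized successor $s_{t+1}$. The remaining care is pure bookkeeping: verifying the first-passage decomposition using the absorbing, zero-reward property $T(g,a,g)=1$, $R(g,a,g)=0$ of goal states, so that no reward is double-counted after absorption.
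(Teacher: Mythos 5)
Your proposal is correct and follows essentially the same route as the paper: both arguments observe that a plausible plan, being supported on transitions of probability one, reaches the goal with probability exactly $1$, which trivially upper-bounds the goal-reaching probability of any policy. Your write-up is somewhat more careful than the paper's --- you state the $p_\pi(s)\le 1$ bound explicitly rather than leaving maximality implicit, and the first-passage/value-function decomposition is a harmless (if unnecessary) addition --- but the key step is identical.
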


\begin{proof}
We calculate the probability of a plausible plan by:
\begin{equation} \label{eq:plan}
\begin{split}
    P(g|i,\pi) = &\\
    P(s_2|i,a_1)\cdot P(s_3|s_2,a_2)\cdot ... \cdot P(g|s_n,a_n)=1,
    i.e.\\
    P(s_2|i,a_1) = P(s_3|s_2,a_2) = ... = P(g|s_n,a_n) = 1
\end{split}
\end{equation}
Therefore, the probability of a plausible plan is $1$. 
\end{proof}

Cost of a plausible plan, $J(s)$, is defined by:
\begin{equation} \label{eq:j}
J(s) = \sum_{s^\prime \in S}P(s^\prime|s,a)[c_a + \gamma J^*(s^\prime)]
\end{equation}
where $J(s)$ is the discounted cost of performing actions in a plan.

\begin{mydef} [Optimal Plan in Goal-based MDP] \label{def:oplan}
The optimal plan $\rho^*$ is a plausible plan that has minimum expected cost from I to G.
\end{mydef}
Therefore, the cost of an optimal plan is calculated by:
\begin{equation} \label{eq:jstar}
J^*(s) = min_{a \in A}\sum_{s^\prime \in S}P(s^\prime|s,a)[c_a + \gamma J^*(s^\prime)]
\end{equation}
We get the states belong to the optimal policy by applying the $\rho^*$ in the transition function:
\begin{equation} \label{eq:sstar}
    s_{\rho^*} = \bigcup_{i=1}^{n}\{s_i|\ T(s_i,a_i, s_{i+1})=1\}
\end{equation}
where $s_1 $ is initial state, and $s_{n+1} = g$.

\begin{mydef}[Optimal policy] Given $\rho^*$ for an initial state $i \in I$ and a goal state $g \in G$ in $\mathbf{M}$, the optimal policy function is defined as $\pi^*(s) = a$, $a \in \rho^*$, $\forall s \in S$.
\end{mydef}
Using the transition function we can calculate the probability of the optimal policy:
\begin{equation}
    \begin{split}
        P(\pi^*|\hspace{1mm}\cdot\hspace{1mm})= T(i,a_1, s_2)\cdot \prod_{i=2}^{n-1}T(s_i,a_i, s_{i+1})\cdot T(s_n,a_n, g),
        \\ \forall s \in S, \forall a \in \rho^*
    \end{split}
\end{equation}
The maximum discounted reward of an optimal policy is: 
\begin{equation} \label{eq:v}
V^*(s) = max_{a \in \mathcal{A}}\sum_{s^\prime \in S}P(s^\prime|s,a)[R(s,a,s^\prime) + \gamma V^*(s^\prime)]
\end{equation}

\begin{theorem} \label{th:two}
An optimal policy $\pi^*$ of a Goal-based MDP, $\mathbf{M}$, is plausible and maximizes the discounted reward of reaching a goal state.
\end{theorem}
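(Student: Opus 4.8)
The plan is to split the statement into its two assertions --- that $\pi^*$ is \emph{plausible} and that it \emph{maximizes the discounted reward} --- and to prove them in that order, leaning on Definition~\ref{def:oplan}, the definition of the optimal policy, and Theorem~1.

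First I would dispatch plausibility, which is nearly immediate from the construction. By Definition~\ref{def:oplan} the optimal plan $\rho^*$ is the minimum-cost element of the set of plausible plans, hence is itself plausible; and $\pi^*$ is built action-by-action from $\rho^*$, with the traversed states given by Equation~\ref{eq:sstar}. Executing $\pi^*$ from any $i \in I$ therefore reproduces the sequence $\rho^*$, and by Theorem~1 this sequence reaches a goal $g \in G$ with probability $1$. Thus $\pi^*$ satisfies Definition~\ref{def:plan} and is plausible.

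Next I would establish reward-optimality. The key leverage is that $\pi^*$ induces deterministic transitions, each $T(s_i,a_i,s_{i+1}) = 1$ by Theorem~1, so the Bellman backup in Equation~\ref{eq:v} collapses to a single term: along the plan $V^*(s_i) = R(s_i,a_i,s_{i+1}) + \gamma\,V^*(s_{i+1})$. Unrolling this recursion and using that $R$ equals $1$ only on the transition entering $G$ and $0$ elsewhere, together with the absorbing zero-reward goal condition $R(g,a,g)=0$, yields the value accumulated by $\pi^*$ in closed form. I would then verify that the greedy maximization in Equation~\ref{eq:v} selects exactly the actions of $\rho^*$ at each state, so that $\pi^*$ satisfies the Bellman optimality equation and is therefore reward-maximizing by the usual fixed-point argument.

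The main obstacle I anticipate is reconciling the two optimality notions built into the setup: $\pi^*$ is \emph{defined} through the cost-minimizing plan $\rho^*$ via Equation~\ref{eq:jstar}, whereas the conclusion concerns the \emph{reward} recursion of Equation~\ref{eq:v}. The crux is to argue that, under this goal-based reward structure, the action minimizing the expected discounted cost-to-go $J^*$ also maximizes the expected discounted reward-to-go $V^*$ at every state --- i.e., that reaching the goal economically and collecting the discounted terminal reward pick out the same greedy action. Making this correspondence precise, rather than merely asserting it, is where the real work lies; once it is secured, the remaining steps are routine unrollings of the Bellman recursions, all licensed by the determinism guaranteed in Theorem~1.
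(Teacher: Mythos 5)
Your plausibility argument is fine and matches the substance of the paper's: the paper phrases it as a contradiction (if $\pi^*$ were not plausible there would exist $s\in s_{\rho^*}$ with $P(g|s,a)=0$ for all $a$, contradicting Equation~\ref{eq:plan}), whereas you argue directly from Definition~\ref{def:oplan} and the determinism of $\rho^*$'s transitions, but the content is identical.

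For the second assertion you take a genuinely different route, and you have correctly put your finger on the step that actually needs proving --- but you have not proved it. The correspondence between the action minimizing the discounted cost-to-go $J^*$ of Equation~\ref{eq:jstar} and the action maximizing the discounted reward-to-go $V^*$ of Equation~\ref{eq:v} is not automatic: with reward $1$ only on entering $G$ and $\gamma<1$, maximizing $V^*$ amounts to minimizing the number of steps to the goal (this is precisely what Theorem~\ref{th:three} later exploits), whereas minimizing $J^*$ minimizes accumulated action costs $c_a$; when the $c_a$ are non-uniform these criteria can select different plans, so the equivalence needs an extra hypothesis (e.g., unit costs) or must be built into the definition of $\pi^*$. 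Deferring this to ``once it is secured'' leaves the theorem's second claim unestablished. For comparison, the paper argues by contradiction: if $\pi^*$ did not maximize the reward, there would be $s\in s_{\rho^*}$ with $P(g|s,a) < P(g|s',a)$ for some $s'\notin s_{\rho^*}$, said to contradict Equation~\ref{eq:sstar}. That argument reasons only about goal-reaching probabilities --- which are all $1$ along any plausible plan --- and never engages the cost-versus-reward tension at all, so the gap you identified is present in the paper's own proof as well; you have located it more candidly than the paper resolves it.
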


\begin{proof}
Let's assume that the optimal policy $\pi^*$ is not plausible. Therefore, $\exists s\in s_{\rho^*}$ s.t. $\forall a \in A, P(g|s,a) = 0$. This contradicts with the Equation~\ref{eq:plan}. Therefore, an optimal policy is plausible.
On the other hand, let us assume that the optimal policy $\pi^*$ does not maximize the reward of reaching the goal state. As a result, $\exists s\in s_{\rho^*}$ s.t. $\forall a \in A, P(g|s,a) < P(g|s^\prime,a)$ where $s^\prime \notin s_{\rho^*}$. This also contradicts with Equation~\ref{eq:sstar}. Consequently, an optimal policy maximizes the discounted reward of reaching a goal state.
\end{proof} 

\begin{theorem} \label{th:three}
An optimal plan $\pi^*$ has the optimal number of steps between the initial state $i$ and a goal state $g$, assuming $\gamma < 1$.
\end{theorem}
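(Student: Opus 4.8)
The plan is to recast the claim about step count into a statement about discounted reward, which by Theorem~\ref{th:two} is precisely the quantity $\pi^*$ maximizes. The crucial observation is that under the reward function specified in the background---where $R(s,a,s')=1$ exactly when $s' \in G$ and $0$ otherwise---the value accumulated along any plausible plan depends only on the number of steps separating the initial state from the goal, and \emph{not} on the individual action costs, since the costs $c_a$ never enter Equation~\ref{eq:v}.

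First I would fix an arbitrary plausible plan $\rho = \{a_1,\dots,a_n\}$ with induced state sequence $s_1 = i,\, s_2,\dots,\, s_{n+1} = g$. By Equation~\ref{eq:plan} every transition along $\rho$ is deterministic, so the expectation over $s'$ in Equation~\ref{eq:v} collapses to a single term at each state. Evaluating the value recursion backward from the goal, using $V^*(g)=0$ and $R(s_n,a_n,g)=1$, gives $V^*(s_n)=1$, $V^*(s_{n-1})=\gamma$, and inductively $V^*(s_k)=\gamma^{\,n-k}$, so that
\begin{equation}
V^*(i) = \gamma^{\,n-1}.
\end{equation}
Thus the discounted reward earned by a plausible plan is a strictly decreasing function of its length $n$ whenever $0<\gamma<1$.

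Second, I would invoke Theorem~\ref{th:two}, which establishes that $\pi^*$ maximizes $V^*(i)$ over plausible plans. Since $n \mapsto \gamma^{\,n-1}$ is strictly decreasing for $\gamma<1$, maximizing $V^*(i)$ is equivalent to minimizing $n$; hence the optimal plan attains the fewest possible steps from $i$ to $g$. Equivalently, one may argue by contradiction: if some plausible plan reached $g$ in $m<n$ steps, its value $\gamma^{\,m-1}$ would strictly exceed $\gamma^{\,n-1}$, contradicting the optimality of $\pi^*$.

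The main obstacle I anticipate is not the telescoping computation itself but reconciling the two notions of optimality in play: Definition~\ref{def:oplan} defines the optimal plan through \emph{minimum discounted cost} (Equation~\ref{eq:jstar}), whereas the step-count argument rides on \emph{maximum discounted reward} (Equation~\ref{eq:v}). I would lean on Theorem~\ref{th:two} to bridge these, and I would also flag the degenerate regime: the statement as written assumes $\gamma<1$, but the strict monotonicity of $\gamma^{\,n-1}$---and hence the uniqueness of the optimal length---additionally requires $\gamma>0$; at $\gamma=0$ all plausible plans share value $0$ beyond the first step and the comparison collapses, so the intended range is $\gamma\in(0,1)$.
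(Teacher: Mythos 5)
Your proof follows essentially the same route as the paper's: both invoke Theorem~\ref{th:two} to reduce step-minimization to maximization of the discounted reward of reaching the goal, and both rest on the fact that the goal transition carries the only non-zero reward, so the value of a plausible plan decreases with its length when $\gamma<1$. Your version is simply more explicit---the backward computation $V^*(i)=\gamma^{\,n-1}$, the bridging of cost-based Definition~\ref{def:oplan} with reward-based Equation~\ref{eq:v}, and the caveat that strict monotonicity also requires $\gamma>0$ are all details the paper's terser argument leaves implicit.
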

\begin{proof}
Following Theorem~\ref{th:two}, we know $\pi^*$ maximizes the discounted reward of reaching the goal state.
Also, from the definition of goal-based MDP, we know that reaching the goal is the only non-zero reward compared to other non-goal states' reward. 
By applying Equation \ref{eq:v} for an optimal policy, the reward is maximizing when we reach the goal state. Therefore, assuming $\gamma < 1$, the number of steps between $i$ and $g$ is minimized.
\end{proof}

\subsection{Reinforcement Learning (RL)}
We define the problem of RL on a goal-based MDP, $\mathbf{M}=\langle S, A, T, r, \gamma, G \rangle$, as finding an optimal policy $\pi^*$ that maximizes the expected discounted cumulative reward for all of the states in the optimal policy, given an initial state $i$.
\begin{equation}
    V_{\pi}(s) = \mathbb{E} \Big[ \sum _{t = 0}^{\infty} \gamma^t r_t |s_0 = i\Big]
\end{equation}
Initially, the agent starts with a random policy, observes the current state, and chooses an action based on the policy.
Then, based on the sampled probability, $p(s^\prime|s,a$), and using the reward, the agent updates its current policy. 
In our work, we use \textit{SARSA} to select the next action and improve the current policy. 
\section{Hierarchical RL using Goal-based MDP} \label{sec:HRL}
Traditional RL approaches require lots of training data to converge, which leads to scaling issues when are applied to large-scale problems.
The popular approach is to use \textit{options} which allows defining high-level actions.
Furthermore, the learned options can be put in use in different scenarios as long as the options' semantics, i.e., their intent, remains consistent with the learned domain.
Therefore, the action level states can change while the higher-level options remain the same across various tasks.

An option is defined as a tuple $o = \langle I_o, \pi_o, r_o, T_o\rangle$, where $\pi_o$, $r_o$ and $T_o \subseteq S$ are the option's policy, reward and the set of termination states for the set of initial states $I_o$ respectively.
Each option, $o \in \mathcal{O}$ is a high-level action, called intent-level, consisting of a set of action-level states in planning. 
Hence, following each option amends multiple predicates in the planning domain. 
The goal states determine termination of the options and their legibility \cite{konidaris2007building}.
Learning policies employing options provides the opportunity to achieve high-level behaviors while defining low-level states' distribution for their termination ~\cite{sutton1999between}. 
The q-function is defined for one option as:
\begin{equation}
    q(s,a) = r(s,a,s^{\prime}) + \gamma \cdot q(s^{\prime}, a^{\prime}), \forall s \in \pi_o
\end{equation}
similarly,
\begin{equation}
    q^* (s,a) = max \mkern4mu q(s,a), \forall s \in \pi_o
\end{equation}
The option's terminations states, $T_o$, are considered sub-goals or option goals.
Therefore, $q(s,a) = 1$ where $s \in T_o$, and $q(s^{\prime}, a^{\prime})$ otherwise, and $s, s^{\prime} \in \pi_o$. 
This means the sub-goals' reward is the only non-zero reward compared to the non-sub-goal states' reward.
This representation of reward is well-positioned in the literature \cite{koenig1993complexity,sutton1990integrated,peng1993efficient,white1992handbook}.

If Q-learning initializes the states with zero, the agent will perform a random walk. 
In the worst-case scenario, the agent explores $2^n$ states. 
Cooperative Reinforcement learning uses directed exploration since the agent is aware of the goal and decreases the worst-case complexity to polynomial w.r.t the number of the states required to be explored.
Therefore, an \textit{admissible} q-function is defined as:
 \begin{equation}
     q^*(s,a) = argmax  \mkern4mu q_{a^\prime \in \rho^*}(s, a^\prime)
 \end{equation}
Consequently, given a set of options, the on-policy method SARSA~\cite{rummery1994line} updates the intent-level value function, the $Q$-function, as follows:
\begin{equation}
    Q(s,o) = Q(s,o) + \alpha\big( r_{o} + \gamma Q(s^\prime,o^\prime) - Q(s,o) \big), \forall o \in \mathcal{O}
\end{equation}
Where $\alpha$ is the learning rate and $\gamma$ is the discount factor.
The on-policy SARSA for estimating Q is presented at Algorithm~\ref{alg:q}.

\begin{theorem} \label{th:five}
An optimal option's policy $\pi_o^*$ has the optimal number of steps between the initial state i and a goal state g $\in T_o$, assuming $\gamma < 1$.
\end{theorem}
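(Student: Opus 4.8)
The plan is to mirror the argument used for Theorem~\ref{th:three}, but lifted from the flat goal-based MDP to the option level: I would replace the global goal $g \in G$ by a termination (sub-goal) state $g \in T_o$, and the value function $V^*$ by the option's $q^*$. First I would establish an option-level analogue of Theorem~\ref{th:two}: that an optimal option policy $\pi_o^*$ is plausible (its trajectory reaches some $g \in T_o$ deterministically) and maximizes the discounted reward of reaching that sub-goal. This should follow from the definition $q^*(s,a) = \max q(s,a)$ together with the admissible q-function, exactly as Theorem~\ref{th:two} followed from Equations~\ref{eq:plan} and~\ref{eq:sstar}: a policy that never reaches $T_o$ would accrue zero reward at every state and so could not be a maximizer of $q^*$.

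The second key step exploits the reward structure stated for options, namely that the sub-goals' reward is the only non-zero reward, with $q(s,a)=1$ for $s \in T_o$ and the recursion $q(s,a) = r(s,a,s') + \gamma\, q(s',a')$ carrying zero immediate reward at every intermediate state. Consequently, for a trajectory that reaches $g \in T_o$ after $k$ transitions, the discounted return telescopes to $\gamma^{k-1}$, i.e. the single terminal reward discounted back to the initial state. Since $0 < \gamma < 1$, the map $k \mapsto \gamma^{k-1}$ is strictly decreasing, so maximizing the discounted return is equivalent to minimizing $k$. Combining this with the first step, $\pi_o^*$ -- being the maximizer of $q^*$ -- must realize the minimal number of steps between $i$ and $g$, which is precisely the claim.

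The main obstacle I expect is the first step. The excerpt writes down the option-level q-recursion and the sub-goal reward convention, but it does not prove an explicit plausibility/optimality result for options the way Theorem~\ref{th:two} does for flat policies. I would therefore argue carefully that an optimal option policy does terminate in $T_o$ -- otherwise the discounting argument has no terminal reward to discount -- and that the admissible q-function rules out policies that cycle among intermediate states without ever reaching a sub-goal. Once this plausibility is secured, the discounting argument is routine and directly parallels Theorem~\ref{th:three}; the only subtlety is ensuring the geometric exponent is tied to the step count rather than to any spurious intermediate reward, which the ``only non-zero reward at $T_o$'' assumption guarantees.
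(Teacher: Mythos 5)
Your proposal follows essentially the same route as the paper: the paper's proof of Theorem~\ref{th:five} simply invokes the argument of Theorem~\ref{th:three}, noting that the sub-goal reward is the only significant reward and that with $\gamma < 1$ maximizing the discounted return forces the minimal step count. The extra care you take -- explicitly establishing an option-level analogue of Theorem~\ref{th:two} so that termination in $T_o$ is guaranteed before the discounting argument applies, and the telescoping computation showing the return equals $\gamma^{k-1}$ -- fills in details the paper leaves implicit, but it is the same underlying argument.
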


\begin{proof}
Following proof of Theorem~\ref{th:three}, since the reward of reaching the goal is comparatively much more significant than other states' reward, by applying Equation~\ref{eq:v} for an optimal policy, the reward is maximized when we reach the goal state. 
Therefore, assuming $\gamma < 1$, the number of steps between $i$ and $g$ is minimized.
\end{proof}

\subsection{Mindset Adaptation in Hierarchical RL}
A mindset is defined using a tuple $\mathcal{M} = \langle \mathbf{M}, \mathcal{O} \rangle$, where $\mathcal{O}$ is set of options on the abstract level given to the robot, and $\mathbf{M}$ is a goal-based MDP model of the environment. 
To capture the mindset changes, the model function $\Lambda: \mathcal{M} \rightarrow 2^F$ is defined to convert a mindset to a set of mindset features \cite{chakraborti1802plan}, where $\mathcal{M}$ is the mindset space and $F$ the feature space.
In this way, one mindset can be adapted to another mindset using editing functions that changes multiple features at a time. 
The set of feature changes is denoted as $\Delta(M_1, M_2)$ and the distance between two mindsets as the number of such feature changes is denoted as $\delta(M_1, M_2)$.
In this work, we assume that the mindset is defined in PDDL \cite{fox2003pddl2}, an extension of STRIPS~\cite{fikes1971strips} as explained in the planning section under related work.

\begin{mydef} [Mindset Adaptation]
We define the mindset adaptation as a tuple: $\langle(\mathcal{M^R, M^H}),\Lambda \rangle$, where $\Lambda$ modifies the goal-based MDP of the human ($\mathcal{M^H}$) to robot ($\mathcal{M^R}$) using $\mathcal{O}$ s.t. the robot's optimal plausible plan $\rho_{R}$ becomes optimally plausible for human as well with respect to Definition \ref{def:oplan}.
\end{mydef} 

At each adaptation step, the robot introduces an option, $o \in \mathcal{O}$, uses $\Lambda$ to modify the features present at the current state of the mindset and eventually gets to $T_o$. 
Therefore, the adaptation process solves an optimization problem utilizing reinforcement learning, providing the robot's intent on the intent-level while solves a planning problem on the action-level.
This process is illustrated in Fig. \ref{fig:hierarchy}.

Mindset adaptation introduces new partitioning on the action-level by providing a new option on the intent level.
Each new tree-like structure is an option on the intent level, and states on the action level are similar to belief distribution. 
Therefore, the cost of adaptation, w.r.t. Equation \ref{eq:j}, is defined as $\Gamma$ in the following Equation:
\begin{equation}
    \Gamma(J_{M^H}, J_{M^R}) = |\underset{s \in M^H}{J(s)} - \underset{\widehat{s} \in M^R}{J(\widehat{s})}|
\end{equation}
As a result, introducing new options can be viewed as changing the posterior belief distribution, which is different from a human's prior belief distribution.
Consequently, performing the mindset adaptation, the robot does not explain the details of planning proactively to maintain the human teammate's cognitive load.
The optimal cost of adaptation, w.r.t. Equation \ref{eq:jstar}, is calculated in the following Equation:
\begin{equation} \label{eq:gammastar}
    \Gamma^* = \Gamma(J^*_{M^H}, J^*_{M^R}) = |\underset{s \in M^H}{J^*(s)} - \underset{\widehat{s} \in M^R}{J^*(\widehat{s})}|
\end{equation}

\begin{mydef} [Concise Mindset Adaptation] \label{def:concise}
We define concise mindset adaptation as a tuple $\langle(\mathcal{M^R, M^H}),\Lambda\rangle$, where $\Lambda$ has the optimal adaptation cost with respect to Equation \ref{eq:gammastar}.
\end{mydef}

\begin{figure*} [htbp]
    \centering
    \includegraphics[width=0.8\textwidth]{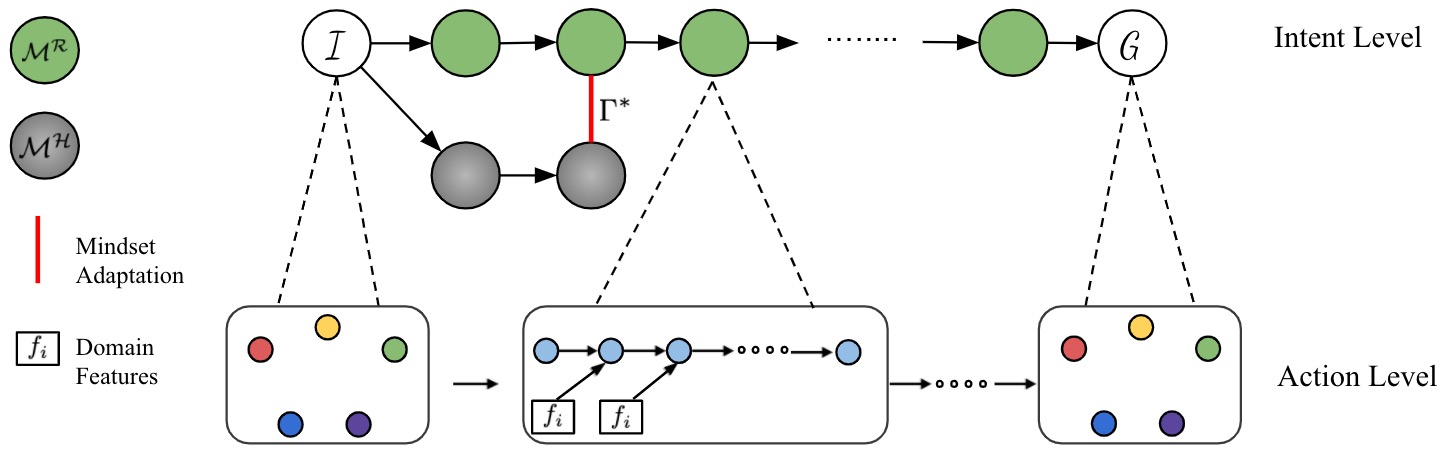}
    \caption{Hierarchical explanation generation.
    The agent learns/employs options on the intent-level while at the action-level, explanations are features added/deleted to each state.
    On the intent-level, explanations about options are intents, while explanations about action-level provide transparency.
    Intent-level options are correlated with action-level states.
    While the agent finds an optimal policy on the intent-level, it provides planning on the action-level.}
    \label{fig:hierarchy}
    \vspace{-0.5cm}
\end{figure*}
Applying the mindset adaptation, the agent creates a hierarchical representation of the domain \cite{konidaris2018skills}.
Generating explanations employing this representation is helpful toward maintaining the cognitive load of the human teammate. 
Throughout the mindset adaptation, there are options that the agent requires to explain in more detail by going down the hierarchy.
Similarly, there are options that the human teammate would be notified about the high-level option, and no further details are required. 
\subsection{Learn Abstract Explanations from Experience}
The agent constructed a planning representation by searching through possible combinations of action-level states within each classification of the option's precondition mask.
 To compute the policy's probability at the intent-level, correlated with action-level states, we calculate the conjunction of the options, distributional beliefs. 
However, without the loss of generality, we use deterministic options to be compatible with PDDL ~\cite{fox2003pddl2}. 
Therefore, in this paper, we used deterministic policies reinforcement learning to calculate the optimal policy.

\begin{mydef} [Concise Explanation Generation by Mindset Adaptation]
Given $(\mathcal{M^R, M^H})$, the objective of concise explanation generation is to find the $\Lambda$, where the cardinality of changes required, $|\Lambda| \leq \Delta(M_1,M_2)$, are minimum subject to the Definition \ref{def:concise}. 
\end{mydef}

Following Equation~\ref{eq:gammastar} and consecutive to Theorem \ref{th:two}, the concise mindset adaptation process has the minimum human plan adaptation cost while maximizes the discounted reward of reaching a goal state.
The expected reward model of the optimal policy, at intent-level, is learned using the human's \textit{expert} traces as input while assuming that the concise mindset adaptation provides minimum cost to amend the human original plan. 
The mindset adaptation process to generate the concise explanations ($\Lambda$) are presented at Algorithm \ref{alg:mindset}. 

\section{Evaluation} \label{sec:eval}
We evaluated our approach by conducting human-subject studies using Amazon Mechanical Turk (MTurk) in the scavenger-hunt domain.
We design this domain to create complex situations that require subjects to invest a significant amount of cognitive effort quickly. 

\begin{algorithm}[!b]
\SetAlgoLined
 \SetKwInOut{Input}{input}
 \SetKwInOut{Output}{output}
 \Input{$\langle\mathcal{M^H},\mathcal{M^R}, I, G\rangle$}
 \Output{$\Lambda$}
 Compute $\Delta(\mathcal{M^H},\mathcal{M^R})$ as the difference between the two mindsets\;
 Compute $\pi^*_H$ using $\mathcal{M^H}$\;
 Compute $\pi^*$ using $\mathcal{M^R}$\;
 initialize $\Lambda, q = \{\}$\;
 \While{$\pi^*_H \neq \pi^*$}{
        \For{each $\delta \in \Delta$}{
            $q$ $\leftarrow$ $Q(\delta, \epsilon)$\;
        }
        $\mathcal{M^H}^\prime$ $\leftarrow$ Modify $\mathcal{M^H}$ using $argmax(q)$\;
        $\Lambda.append()$ $\leftarrow$ $\forall o \in argmax(q)$\;
        Update $\pi^*_H$ using $\mathcal{M^H}^\prime$\;
 }
 
 return $\Lambda$\;
 \caption{Mindset Adaptation}
 \label{alg:mindset}
\end{algorithm}

\begin{algorithm}[!b]
\SetAlgoLined
 \SetKwInOut{Input}{input}
 \SetKwInOut{Output}{output}
 \Input{$\delta,\epsilon $}
 \Output{$Q(s,o)$, $\forall o \in \delta$}
 initialize $Q(s,o)$ arbitrarily\;
 \For{each episode}{
    initialize $s$ as i $\in I$\;
    $p_1$ $\leftarrow$ Random(\textit{seed})\;
    \While{$s \notin G$}{
        \eIf{$(p_1<\epsilon)$}{
            $o$ $\leftarrow$ Maximum value using $\pi$ of $Q(s,o)$\;
            }
            {$o$ $\leftarrow$ Choose randomly\;}
            \For{each episode}{
                $r$, $s^\prime$ $\leftarrow$ Perform(o)\;
                $p_2$ $\leftarrow$ Random(\textit{seed})\;
                \eIf{$(p_2<\epsilon)$}{
                    $o^\prime$ $\leftarrow$ Maximum value using $\pi$ of $Q(s^\prime,o^\prime)$\;}
                    {$o^\prime$ $\leftarrow$ Choose randomly\;}
                    $s \leftarrow s^\prime$\;
                    $o \leftarrow a^\prime$\;
        }
    }
 }
 return $Q(s,o)$\;
 \caption{SARSA for mindset adaptation}
 \label{alg:q}
\end{algorithm}

\begin{figure*} [hbtp]
\centering
\includegraphics[width=0.9\textwidth]{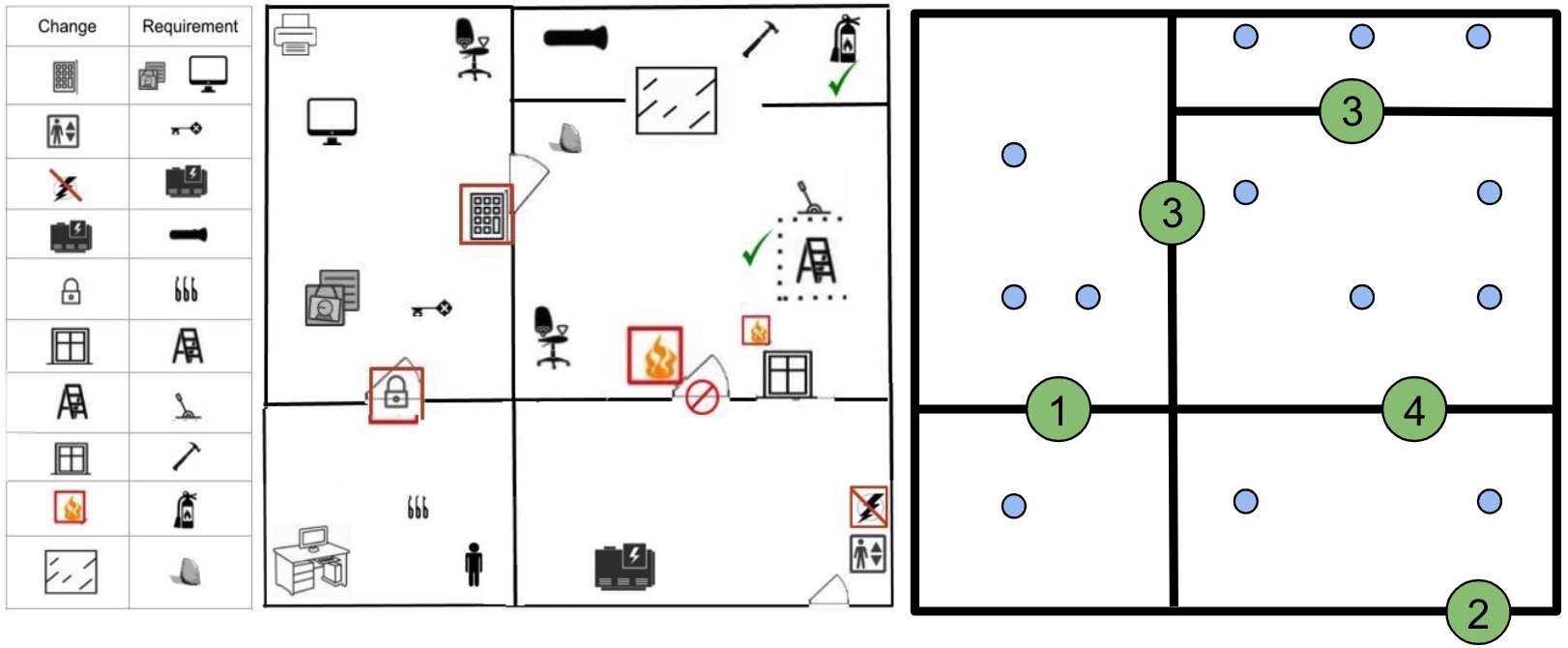}
\caption{Left:Illustration of the scavenger-hunt domain. Right: The hierarchical planning task representation of the domain.
The green circles, intent-level options, are local optimas for the policy based on SARSA, and the small blue circles are action-level states for each option. 
The numbers in each local optima represent the number of action-level explanations the agent has to provide, depending on the scenario, upon learning an option.
}
\vspace{-0.5cm}
\label{fig:scavenge}
\end{figure*}

\subsection{Scavenger-Hunt Domain} \label{sec:domain}
The domain portrays a damaged office building after an earthquake, showed by a floor-plan in Fig.~\ref{fig:scavenge}. 
Typically, the human uses the doors to exit each room and eventually exit the building via the elevator from his office.
However, an earthquake may interrupt the human's original path in different ways.
The first response team sent an autonomous robot to the damaged building to find and help the trapped human navigate through the building. 
The robot's goal is to explain to the human whenever the robot's plan (optimal plan) becomes less interpretable according to the human's model of the environment.
Therefore, the robot provides its intent to the human to verify the legibility of the synthesized plan.
At any step, the robot can explain multiple correlated changes illustrated in red boxes, as an intent-level explanation or a unique feature change, action-level explanation.
For example, the following is an illustration of intent level explanation \textit{power-out} and action level explanations such as \textit{glass-broken, fire, or has-fireExt}. 
As Fig.~\ref{fig:example} shows, for simplicity, we use only boolean variables as predicates.

\footnotesize{
\begin{alltt}
(:action activate_pin :parameters (?room - roomTWO)
:pre (and (at ?room )...\textbf{(power-out)})
:$eff^{+}$       (locked-door ?room))

(:action elevator_out_of_service \\
:parameters (?room -roomFive)
:pre (and (at ?room )...\textbf{(power-out)})
:$eff^{+}$       (door-jam ?room))

(:action get_fireExt :parameters (?room - roomTwo) 
:precondition (and (at ?room) \textbf{(glass_broken) (fire)})
:effect (and \textbf{(has_fireExt)} ))

(:action putout_fire :parameters (?room - roomFour)
:pre (and (at ?room )\textbf{(has_fireEXT)(power-out)})
:$eff^{+}$ (no_fire))
\end{alltt}
}
\normalsize
The participants are given a map that contains all the possible changes. 
These are also shown on the left side of Fig. \ref{fig:scavenge}.
A total of $10$ possible changes has changed the trapped human's domain, which resulted in $13$ changes in the action-level, classified in $5$ intent level options.
Thus, the action-level space has $2^{10}$ states, while the intent level has $2^5$ states.
For any given scenario, only a few selected changes will be present. 

Fig.~\ref{fig:example} illustrates an explanation hierarchy that links the causality of high-level intents to the detailed low-level policy that is different from a human's model due to the discrepancies.
The robot starts explaining the discrepancies utilizing a top-down approach and stops going to the details when the human can understand why the robotic agent performs a different action from her expectation. 

\begin{figure}
    \centering
    \includegraphics[width=0.9\columnwidth]{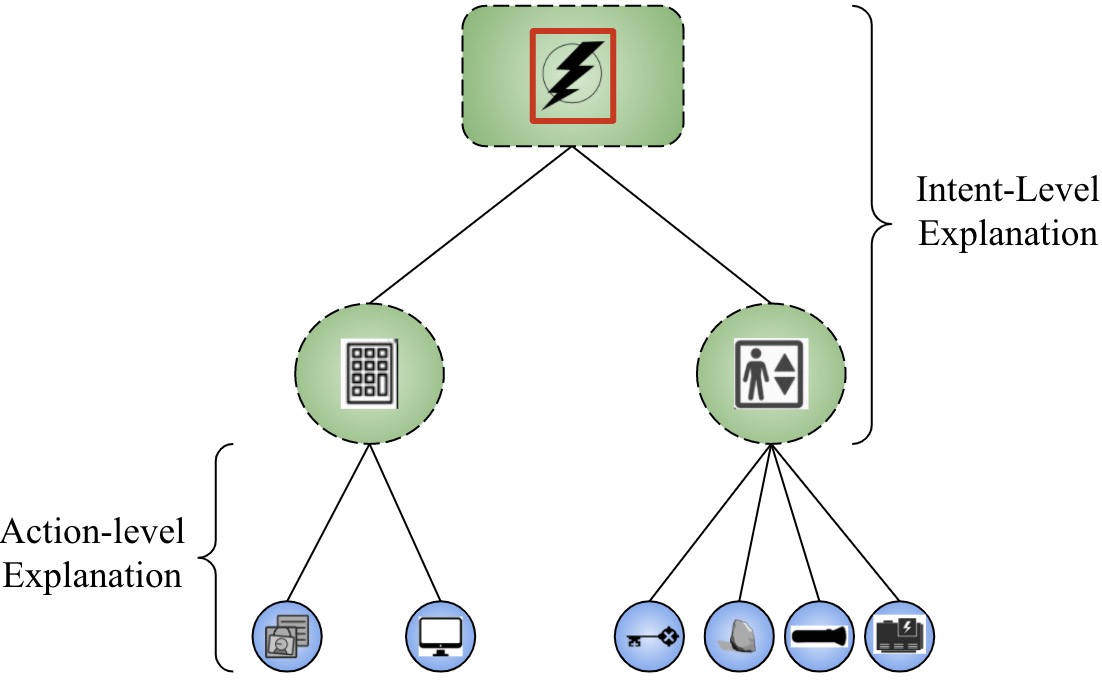}
    \caption{The explanation links the causality of high-level intent to the detailed low-level policy. The power outage caused the PIN pad in the second room to be activated and the elevator to be disabled. 
    For the elevator, a key must be put in the emergency power generator in a circuit that activates the elevator.
    } 
    \label{fig:example}
    \vspace{-0.5cm}
\end{figure}

\subsubsection{Experiment Design}
First, we explained the scavenger-hunt domain to participants and tasked them to play the trapped person's role. 
Second, we asked the participants to determine the plan's next action after each robot's explanation and write down the robot's intent. 
This step was designed to encourage the participants to clearly understand the situation such that the robot aimed to unfold the plan. 
The explanations were provided in plain English language, and robot's actions were depicted using GIF images in a 2D simulation, as shown in Fig.~\ref{fig:scavenge}, to the participants.
Then, we asked the participants to explain if the robot is performing according to their expectations. 
To ensure that the subject must question some actions to perform well, we purposely inserted random actions after some of the explanations.
Moreover, to ensure the data's quality, we implemented a sanity check question to ensure the participants understood the task.
We removed the responses with wrong answers to the sanity questions or took them over four minutes to finish the task. 

\subsection{Results}
We conducted a survey using Qualtrics and recruited $68$ human subjects using Mturk, with a HIT acceptance rate of $99\%$.
After sifting out the responses described in the previous section, we used $54$ responses over $10$ scenarios. 
We compared the outputs of our explanation generation algorithm (H-RL) based on the reward policy learned by hierarchical RL algorithm with the subjects' responses for two other baselines: Online Explanation Generation (OEG) ~\cite{zakershahrak2019online} and Progressive Explanation Generation (PEG) ~\cite{zakershahrak2020order}.
These baselines are state of the art for maintaining cognitive demand for explanation generation in planning tasks in a human-robot teaming scheme. 
OEG focuses on dividing the explanation generation process and tangling them with plan execution concerning plan prefix, plan optimality, and the clarity of the next executed action~\cite{zakershahrak2019online}.
PEG focuses on the comprehension progression of a sequence of explanations as a cognitive demand to amend the original plan~\cite{zakershahrak2020order}. 

Our evaluation aimed to analyze whether we can learn the human preferences from training scenarios and applying them to the testing scenarios (H1).
Our method's accuracy was $94.4\%$: our approach successfully matched $51$ out of the $54$ human responses across $10$ scenarios as illustrated in Table~\ref{tab:accuracy}. 
This result showed that humans indeed had particular preferences for information hierarchy in such situations and that our method could capture these preferences.
Moreover, this Table reveals that sharing all of the detailed information, even in a preferred progressive sequence (PEG), is not as useful as abstraction since the accuracy is considerably lower in PEG compared to the two other methods. 
Furthermore, although dividing the explanations and entwining them with plan execution helps increase the accuracy in OEG, the results confirm that our approach's effectiveness is mainly due to the hierarchical context of the explanations.
These results verify H1.

\begin{figure}
    \centering
    \includegraphics[width=0.9\columnwidth]{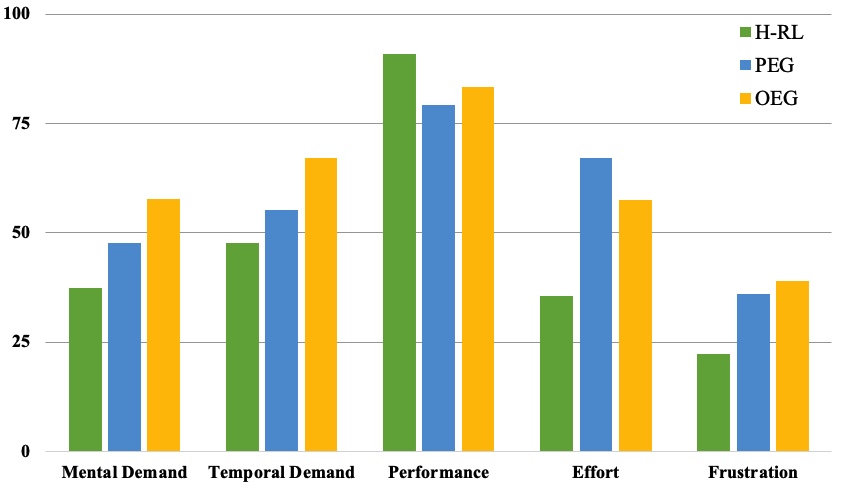}
    \caption{Comparison of the results of all of TLX categories across different settings}
    \label{fig:tlx}
\end{figure}

\begin{figure}
    \centering
    \includegraphics[width=0.45\columnwidth]{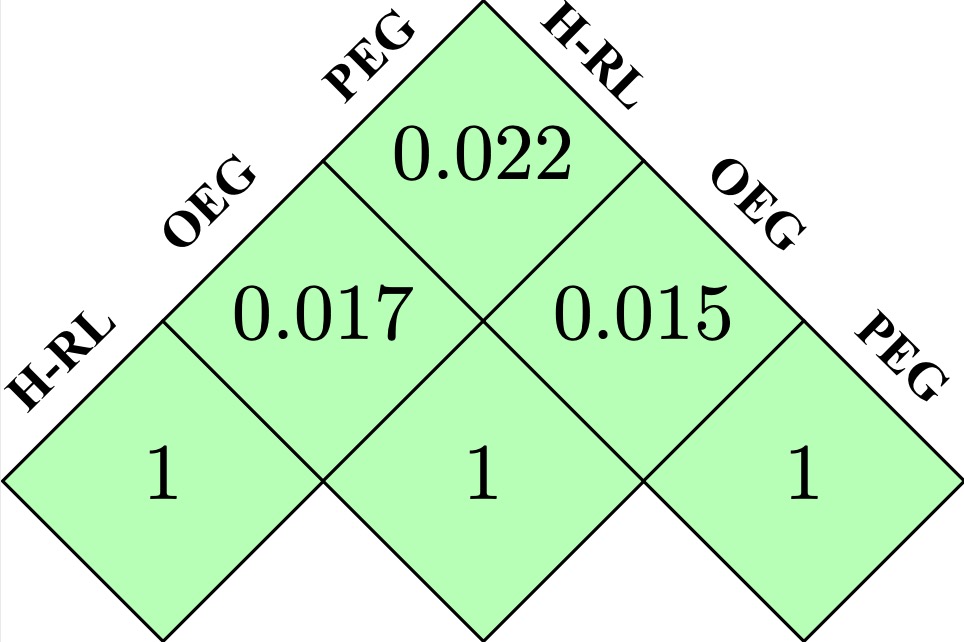}
    \caption{p-values across different approaches on the mental workload}
    \label{fig:p-value}
    \vspace{-0.5cm}
\end{figure}


Table~\ref{tab:scenario} showed the comparison results of the time taken to calculate explanations using different approaches in the scavenger-hunt domain across different scenarios.
As shown in this Table, the time for H-RL is considerably lower than the other two methods.
One reason is the deterministic policies' choice for RL helped to reduce the search space. 
Moreover, this Table illustrates that the total number of H-RL explanations is lower than other approaches, which contributes to lowering the cognitive demand required to understand the changes (H2).

The subjective measures in Fig.~\ref{fig:tlx} reaffirm the conclusions.
H-RL has the best performance and the lowest mental and temporal demand.
Due to intertwining explanations with plan execution, OEG is expected to create more temporal demand.
The $p$-values for the subjective measures are presented in Fig.~\ref{fig:p-value}. 
The results indicate statistically significant differences between H-RL, OEG, and PEG. 
Thus, these subjective results with results of Table \ref{tab:scenario} confirm H2.

\begin{table}
    \centering
    \begin{tabular}{l|c|c|c|c}
    \hline
    ~               & OEG & PEG   & H-RL & Random \\ \hline
    Accuracy        & 0.866 & 0.775 & \textbf{0.944}  & ~   \\ \hline
    \# Actions & 4.714 & 7.161 & \textbf{3.170}   & 3/36   \\ \hline
    \end{tabular}
    \caption{Objective performance in terms of plan accuracy and number of questionable actions based on the subjects' feedback for the three settings.
    The ground truth for questionable actions is $3$ out of $36$ in total.}
    \label{tab:accuracy}
\end{table}
\begin{table}[h]
\centering
\renewcommand{\arraystretch}{1.2}
    \begin{tabular}{l|c|c|c|c|c|c}
    \hline
    Pr. & \multicolumn{2}{c|}{H-RL} & \multicolumn{2}{c|}{OEG} & \multicolumn{2}{c}{PEG} \\
    ~   & $|E|$ & Time & $|E|$ & Time & $|E|$ & Time \\ \hline \hline
    P1   &  3  & 18.2   & 7  & 43.1 & 7  & 33.7      \\ 
    P2   &  5   & 21.8  & 8  & 43.6 & 7  & 35.4      \\ 
    P3   &  5   & 23.7  & 12 & 45.3 & 12 & 38.5      \\ 
    P4   &  5   & 28.0  & 11 & 44.8 & 11 & 36.2      \\
    P5   &  4   & 25.4  & 10 & 44.4 & 9  & 35.9         \\
    P6   &  5   & 34.3  & 10 & 44.5 & 10 & 36.0         \\
    P7   &  5   & 30.5  & 9  & 44.1 & 9  & 36.1         \\
    P8   &  4   & 26.2  & 10 & 44.3 & 9  & 36.0         \\
    P9   &  3   & 19.7  & 8  & 39.4 & 8  & 35.7         \\
    P10   & 4   & 25.9  & 8  & 39.5 & 8  & 35.5         \\ \hline \hline
    Average & \textbf{4.3} & \textbf{25.37}& 9.3& 43.3& 9& 35.9 \\
    \hline
    \end{tabular}
    \caption{Comparison of explanation size, and time (in seconds) taken to generate the explanations using the different methods across $10$ scenarios of the scavenger-hunt domain.}
    \label{tab:scenario}
    \vspace{-0.6cm}
\end{table}

\section{Conclusion \& Future Work}
In this paper, we studied hierarchical explanation generation employing reinforcement learning to maintain the cognitive load.
We took a step further from the prior work by considering the explanation abstraction level while considering the underlying cognitive effort required for the human teammate to understand the agent's intent.
This results in a general framework for hierarchical explanation generation. 
One interesting observation conclusion was that making an explanation is an incremental cognitive process based on the shared information's details' scale.
To address the challenge with modeling human preferences of the information order, we adopted a goal-based MDP. 
We applied RL to learn the explanation process as a policy based on traces. 
Our first contribution is to show that humans indeed demonstrate preferences for the information ranking to scale based on its detail level. We can indeed learn about such preferences using our framework. 
This verified H1.
Results from this domain validated H2. 
Finally, we showed that H-RL did improve the task performance and reduce cognitive load. 

One interesting future direction is to generalize the MDP model and use Boltzmann machines as an energy-based model instead since MDP is quite restrictive for modeling human cognition. 
Moreover, it enables the use of continuous stochastic cognition distributions over all possible plans.
This is well suited for probabilistic planning and a more realistic assumption for human cognition, which correlates with the predictability and legibility of different plans.
As a result, they create more robust plans that are less likely to overfit.
\bibliographystyle{IEEEtran}
\bibliography{reference}
\end{document}